\titleclass{\subsubsubsection}{straight}[\subsection]
\newcounter{subsubsubsection}[subsubsection]
\renewcommand\thesubsubsubsection{\thesubsubsection.\arabic{subsubsubsection}}
\renewcommand\paragraph{\@startsection{paragraph}{5}{\z@}%
  {3.25ex \@plus1ex \@minus.2ex}%
  {-1em}%
  {\normalfont\normalsize\bfseries}}
\renewcommand\subparagraph{\@startsection{subparagraph}{6}{\parindent}%
  {3.25ex \@plus1ex \@minus .2ex}%
  {-1em}%
  {\normalfont\normalsize\bfseries}}
\def\toclevel@subsubsubsection{4}
\def\toclevel@paragraph{5}
\def\toclevel@paragraph{6}
\def\l@subsubsubsection{\@dottedtocline{4}{7em}{4em}}
\def\l@paragraph{\@dottedtocline{5}{10em}{5em}}
\def\l@subparagraph{\@dottedtocline{6}{14em}{6em}}
\newtheorem{ax}{Property}
\newtheorem{theorem}{Theorem}
\newtheorem{lemma}{Lemma}
\title{A Note On $k$-Means Probabilistic Poverty}
\author{Mieczys{\l}aw A. K{\l}opotek}
\date{}
\begin{document}
\maketitle
\begin{abstract}
It is proven, by example, that the version of $k$-means with random initialization does not have the property
\emph{probabilistic $k$-richness}.
\end{abstract}
\section{Introduction}
Kleinberg \cite{Kleinberg:2002} coined the term of $k$-richness of distance-based clustering algorithms, meaning the possibility to partition a set of objects into any $k$ non-empty (disjoint) subsets via modifying the distances between these objects. 
However, there exist non-deterministic, probabilistic algorithms which do not fit this characterization because of non-deterministic behaviour. 
Therefore Ackerman at el 
\cite[Definition 3
($k$-Richness)]{Ackerman:2010NIPS}
introduce the concept of \emph{probabilistic $k$-richness}. 
This kind of richness they defined as  
\begin{ax}  \label{ax:k-richness-prob}
For any partition $\Gamma$ of the set $\mathbf{X}$  consisting of exactly $k$ clusters and every $\epsilon>0$ there exists such a distance function $d$ that   the   clustering function  
returns this partition $\Gamma$ with probability exceeding $  1-\epsilon$.   
\end{ax} 

They postulate  in their 
  Fig.2 
(omitting the proof) that    \emph{probabilistic $k$-richness  in probabilistic sense  } is possessed by version of the $k$-means\footnote{
Various versions of $k$-means algorithm are described e.g. in \cite{STWMAK:2018:clustering}.
}
  algorithm with random initialization,
which will be called here $k$-means-random.

Based on a one-dimensional example, we demonstrate that 
 this is not true.

\begin{theorem}\label{th:nokrichnesskmeansrandom}
$k$-means-random algorithm  is not probabilistically $k$-rich  for $k\ge 4$.
\end{theorem}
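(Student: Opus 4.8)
The plan is to refute probabilistic $k$-richness directly: it is enough to produce one finite set $\mathbf{X}$, one $k$-clustering $\Gamma$ of it, and one constant $\epsilon_0>0$ such that for \emph{every} distance function $d$ one has $\Pr\bigl[k\text{-means-random}(\mathbf{X},d)=\Gamma\bigr]\le 1-\epsilon_0$. I would take $\mathbf{X}=D\cup\{q_1,\dots,q_{k-1}\}$, where $D$ is a ``bulk'' of $N$ points for a large fixed $N$, and let $\Gamma=\bigl\{D,\{q_1\},\dots,\{q_{k-1}\}\bigr\}$ be the clustering that keeps all of $D$ in one cluster and isolates each of the remaining $k-1$ points.

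First I would dispose of the easy embeddings. Every limit of Lloyd's iteration is a centroidal Voronoi partition, hence on the line a partition of the data into intervals; so unless $D$ is a block of consecutive points in the order induced by $d$, the algorithm never outputs $\Gamma$ and the probability is $0$. Thus assume $\Gamma$ is a fixed point of Lloyd's iteration: then $D$ is an interval $[\min D,\max D]$, it is internally cohesive (each point of $D$ is strictly closer to $\bar D$ than to any $q_i$), and the points $q_i$ split into a left group $Q_L$ and a right group $Q_R$ lying outside that interval.

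The core of the argument is a bad initialization that no such $d$ can undo. Forgy initialization draws $k$ data points uniformly as the starting centres; let $E$ be the event that all $k$ of them lie in $D$, and put $\epsilon_0:=\Pr[E]=\binom{N}{k}/\binom{N+k-1}{k}$, a constant in $(0,1)$ which tends to $1$ as $N\to\infty$. Conditioned on $E$, I claim Lloyd's iteration cannot converge to $\Gamma$. Since all $k$ starting centres lie in $D$ — hence to the right of $Q_L$ and to the left of $Q_R$ — the leftmost centre $c_L$ is the closest centre to every point of $Q_L$, and the rightmost $c_R$ to every point of $Q_R$; a short monotonicity argument (the centroid of $c_L$'s cluster stays the smallest of the $k$ centroids at every step, because that cluster always contains the far-left block $Q_L$, and dually for $c_R$) shows that $Q_L$ stays forever in $c_L$'s cluster and $Q_R$ forever in $c_R$'s. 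Hence only $c_L$ and $c_R$ can ever migrate out of $[\min D,\max D]$; the other $k-2$ centres keep clusters contained in $D$ at every iteration, so $D$ is split among at least $k-2\ge2$ nonempty clusters. The final partition therefore has at least two clusters that are subsets of $D$, whereas in $\Gamma$ the only cluster contained in $D$ is $D$ itself; so the output differs from $\Gamma$.

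Putting this together gives $\Pr[k\text{-means-random}(\mathbf{X},d)=\Gamma]\le 1-\Pr[E]=1-\epsilon_0<1$ for every $d$, so for any $\epsilon<\epsilon_0$ no $d$ makes this probability exceed $1-\epsilon$, and probabilistic $k$-richness fails. The hypothesis $k\ge 4$ is used only where ``$k-2\ge 2$'' is invoked: for $k=3$, two escaping centres could leave a single centre governing all of $D$, and the argument collapses. I expect the main work to be the monotonicity claim — establishing, uniformly over all admissible $d$, that \emph{only} $c_L$ and $c_R$ ever leave the interval $[\min D,\max D]$. This needs the facts that all of $Q_L$ (resp.\ $Q_R$) stays trapped in a single cluster, so no third centre is dragged outside, and that the $k-2$ centres inside $D$ keep nonempty clusters — for which one uses that Forgy seeds them on actual points of $D$, together with the convention that $k$-means-random reseeds rather than leaves a cluster empty. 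It is also worth noting that the counterexample is genuinely one-dimensional: in higher dimension one could place $q_1,\dots,q_{k-1}$ symmetrically around $D$ so that several centres escape in different directions and $D$ is reunited — which squares with the paper building its example on the line.
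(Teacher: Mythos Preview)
The central claim of your argument---that under the bad event $E$ (all $k$ seeds land in $D$) only the extreme centres $c_L$ and $c_R$ can ever leave $[\min D,\max D]$, so that $D$ remains split among at least $k-2$ centres---is false. The ``monotonicity'' you invoke is circular: you argue that $Q_L$ stays in $c_L$'s cluster because $c_L$ stays leftmost, and that $c_L$ stays leftmost because its cluster contains $Q_L$. What is actually true on the line is only that the \emph{order} of the centres is preserved; this does not stop $c_2$ from absorbing part of $Q_L$ once $c_1$ has been dragged far to the left, and then being pulled out of $D$ itself.

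A concrete counterexample: take $k=4$, $D=\{0,1,2,3\}$, $Q_L=\{-10^{6},-10\}$, $Q_R=\{10^{6}\}$. Then $\Gamma=\bigl\{\{-10^{6}\},\{-10\},D,\{10^{6}\}\bigr\}$ is a Lloyd fixed point. Seed the four centres at $0,1,2,3$ (all in $D$). After the first assignment $c_1$ owns both points of $Q_L$ and its centroid jumps to about $-3.3\times 10^{5}$; at the next assignment the boundary $(c_1+c_2)/2\approx -1.67\times 10^{5}$ separates $Q_L$: $-10^{6}$ stays with $c_1$ but $-10$ is now assigned to $c_2$. One further step moves $c_2$ to $-10$ and hands all of $D$ to $c_3$, and the run halts exactly at $\Gamma$. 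The same phenomenon occurs for large $N$ (e.g.\ $D=\{0,\dots,99\}$, $Q_L=\{-10^{9},-1000\}$): the second centre is gradually pulled leftwards out of $D$ and the iteration converges to $\Gamma$. Hence there exist admissible $d$ for which $\Pr[\text{output}=\Gamma\mid E]>0$, and your uniform bound $\Pr[\text{output}=\Gamma]\le 1-\epsilon_0$ fails.

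The paper sidesteps this difficulty by a different construction. It takes $2k$ collinear points with target clustering the $k$ consecutive pairs, and proves (Lemma~\ref{le:four} for $k=4$, Lemma~\ref{le:moreThanFour} for $k>4$) that for \emph{every} assignment of distances there exists an initialisation---allowed to depend on $d$---from which Lloyd's iteration misses the target. Because the number of possible seedings is finite and their sampling probabilities are fixed independently of $d$, one such bad seed per $d$ already keeps the failure probability bounded away from zero uniformly, which is exactly what probabilistic $k$-richness forbids.
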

\begin{proof}
The Theorem follows directly from Lemma \ref{le:four} and Lemma  
\ref{le:moreThanFour}.
The logic is as follows:
There is only a limited number of distinct initializations for a given dataset to be clustered. 
They are picked randomly according to some distribution, in case of $k$-means-random independent of the distances between objects. 
If, for each distance function between the objects, there exists 
 at least one  initialization  for which the expected clustering cannot be found,
then the error $\epsilon$ cannot take on any positive value (close to zero).   
\end{proof}
 
\begin{lemma} \label{le:four} 
In a one-dimensional Euclidean space, given 8 points to cluster into 4 clusters 
each of two neighbouring data points, for each set of distances between data points, 
there exists a $k$-means-random initialization such that the desired clustering is not achieved.  
\end{lemma}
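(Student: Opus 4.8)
First I normalize the setup: on a line the pairwise distances determine the eight objects up to an isometry, so I work with data $x_1<x_2<\dots<x_8$ on $\mathbb{R}$ and target partition $\Gamma=\{\{x_1,x_2\},\{x_3,x_4\},\{x_5,x_6\},\{x_7,x_8\}\}$. Recall that $k$-means-random selects its $k=4$ initial centres by a rule that is independent of the distances and has only finitely many outcomes, each of fixed positive probability (for Forgy seeding, a uniformly random $4$-subset of the data; the argument applies verbatim to the random-partition variant). One easy case can be disposed of immediately: if the four cluster means of $\Gamma$ do not reproduce $\Gamma$ under nearest-centre reassignment, then Lloyd's iteration can never output $\Gamma$, so \emph{every} initialization witnesses the lemma; hence from now on I assume $\Gamma$ is a stationary partition of Lloyd's iteration for the given distances.

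The plan is then to exhibit, for each such configuration, one seeding from which Lloyd's converges to a stationary partition different from $\Gamma$. The natural candidates are the two ``one-sided'' seedings $S_L=\{x_1,x_2,x_3,x_4\}$ and $S_R=\{x_5,x_6,x_7,x_8\}$, each a legal outcome of random seeding. A one-line computation of the first reassignment shows that from $S_L$ the iteration enters the partition $\{x_1\}\mid\{x_2\}\mid\{x_3\}\mid\{x_4,\dots,x_8\}$ (each of $x_1,\dots,x_4$ is its own seed, and $x_5,\dots,x_8$ all fall to the seed $x_4$), and, by reflection, from $S_R$ it enters $\{x_1,\dots,x_5\}\mid\{x_6\}\mid\{x_7\}\mid\{x_8\}$. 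I would then invoke the standard fact that in one dimension every partition produced after the first reassignment is an interval partition and stays one; so the state of the iteration is encoded by which three of the seven gaps $x_{i+1}-x_i$ it cuts, with $\Gamma$ cutting gaps $2,4,6$, the post-$S_L$ state cutting gaps $1,2,3$, and the post-$S_R$ state cutting gaps $5,6,7$.

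The core step is to show that, whatever the configuration, at least one of the two trajectories never reaches the cut pattern $\{2,4,6\}$. The mechanism on the $S_L$ side is that the leftmost cluster starts as the singleton $\{x_1\}$, whose centre is pinned at $x_1$: such a cluster can only remain $\{x_1\}$ or be merged rightward into a prefix $\{x_1,x_2,\dots\}$, and once it is a prefix of length at least three it can never re-split, because the left boundary of the leftmost cluster is always $-\infty$ and Lloyd's moves only interior cut points rather than creating a new one inside an extreme cluster. I would make this quantitative by tracking a monotone functional of the iteration (essentially the position of the leftmost cut together with the mean of the leftmost cluster), concluding that the $S_L$-trajectory can reach $\{2,4,6\}$ only when the configuration's ``mass'' is concentrated suitably to the left; the mirror statement then shows that in that case the $S_R$-trajectory is doomed, and conversely. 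Combining the two one-sided dichotomies with the assumed stationarity of $\Gamma$ --- which pins the gaps down enough to make the split exhaustive --- one of $S_L$, $S_R$ is a seeding from which $\Gamma$ is not returned, which is exactly the lemma.

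I expect the main obstacle to be this core step. A one-iteration argument will not suffice, because a Lloyd trajectory can wander before it stabilizes; what is needed is a genuine invariant of the one-dimensional iteration --- a quantitative ``no-overshoot'' bound on how far the extreme cut can migrate --- together with a clean criterion, phrased in the gaps $x_{i+1}-x_i$ and sharpened by the stationarity of $\Gamma$, that decides which of $S_L$, $S_R$ is the bad seeding. The remaining ingredients --- the reduction to the line, the first-iteration computations, and the finiteness of the seed distribution --- are routine.
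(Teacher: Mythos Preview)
Your proposal is a plan rather than a proof, and you say so yourself: the decisive dichotomy --- that for every stationary configuration at least one of the seedings $S_L,S_R$ fails to lead Lloyd's iteration to $\Gamma$ --- is never established. The heuristic you offer for it does not hold up. In particular, the assertion that the leftmost cluster, once it is a prefix of length at least three, ``can never re-split'' is false as a statement about Lloyd's iteration: the three interior cut points simply move, and nothing prevents the leftmost cut from drifting left again (take the prefix $\{x_1,x_2,x_3\}$ with centre $c_1$ and a second cluster whose centre $c_2$ satisfies $(c_1+c_2)/2<x_3$). Even if the prefix-monotonicity claim were correct, it would still not block the trajectory in which the leftmost cluster passes from $\{x_1\}$ straight to $\{x_1,x_2\}$ and the remaining cuts then wander to gaps $4$ and $6$. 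So the ``no-overshoot'' invariant you identify as needed is genuinely missing, and without it the $S_L$/$S_R$ symmetry argument collapses: you have no criterion, phrased in the gaps, that exhaustively decides which side fails.

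The paper takes a very different route. Its proof of the lemma (Section~\ref{sec:A}) is an exhaustive, machine-generated case analysis: the space of admissible gap configurations is broken into finitely many regions, and in each region a concrete bad initialization is exhibited and the full Lloyd trajectory is followed symbolically to a fixed point different from $\Gamma$. There is no single ``monotone functional'' or two-seed dichotomy; instead, different regions require different seeds, and the verification is case-by-case. Your structural approach would be more elegant if it could be completed, but as it stands the missing invariant is the whole difficulty, whereas the paper's enumeration sidesteps the need for one at the cost of length and computer assistance.
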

\begin{proof}
See Section \ref{sec:A}.
\end{proof}

\begin{lemma} \label{le:moreThanFour}
In a one-dimensional  Euclidean space, given 2$k$ points to cluster into $k$ clusters ($k>4$)
 each of two neighbouring data points, for each set of distances between data points, 
there exists a $k$-means-random initialization such that the desired clustering is not achieved.  
\end{lemma}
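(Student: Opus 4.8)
\emph{Proof idea.} The plan is to reduce the statement to Lemma~\ref{le:four} by isolating an eight-point sub-instance and seeding the remaining $k-4$ centroids one per leftover pair. Write the sorted data as $x_1<\dots<x_{2k}$, let $\Gamma=\{\{x_{2j-1},x_{2j}\}:1\le j\le k\}$ be the target partition, and set $a_j=x_{2j}-x_{2j-1}$ and $b_j=x_{2j+1}-x_{2j}$ for the within- and between-pair gaps.

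First I would dispose of the easy case. The $k$-means-random algorithm only ever outputs a fixed point of the Lloyd update (a centroidal partition); so if $\Gamma$ is not such a fixed point, then \emph{no} initialization can return it, every initialization is ``bad'', and the conclusion holds. Hence assume $\Gamma$ is a Lloyd fixed point; this yields, for every $j$, the ``tight and well-separated'' inequalities stating that each data point is strictly closer to the midpoint of its own pair than to the midpoint of either neighbouring pair.

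Next I would pick four consecutive pairs $P_r,P_{r+1},P_{r+2},P_{r+3}$ (the first four, say, or the block with the widest flanking between-gaps), apply Lemma~\ref{le:four} to the corresponding eight points with the distances inherited from $d$ and the induced target $\gamma$, and obtain a four-element seed $J$ from which $4$-means Lloyd on those eight points never produces $\gamma$. I would then run $k$-means-random on the whole data set from the $k$-element initialization $I=J\cup\{\text{left endpoint of }P_i:\ i\notin\{r,\dots,r+3\}\}$, and argue that Lloyd started from $I$ never converges to $\Gamma$: if it did, its $k$ centroids would converge to the $k$ pair-midpoints, so in particular the four centroids seeded inside the block would have to reach the midpoints of $P_r,\dots,P_{r+3}$ --- i.e.\ the configuration $\gamma$ --- contradicting the choice of $J$.

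The main obstacle is precisely closing this last step: ruling out that the $k-4$ ``outside'' centroids rescue the run by nudging the four ``inside'' centroids into $\gamma$. When the block is well separated in absolute terms the point is easy --- a barrier argument (no centroid ever crosses the wide flanking gaps during the iteration, since a centroid is a convex combination of its assigned points and each block point stays closer to an inside centroid) shows the four inside centroids evolve exactly as in the isolated eight-point run, so Lemma~\ref{le:four} applies verbatim. The delicate regime is when $\Gamma$-stability forces each pair to be tight yet still allows the block to sit close to its neighbours, since the stability inequalities only bound $b_j$ from below by quantities of order $|a_j-a_{j\pm1}|$, which can be tiny; there I would abandon the black-box use of Lemma~\ref{le:four} and instead imitate the explicit argument of Section~\ref{sec:A}, choosing the bad initialization directly from the gap vector $(a_j,b_j)_j$ and following the first few Lloyd iterations to exhibit a persistent cluster of size different from $2$. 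In either regime we exhibit, for the given $d$, at least one bad initialization; since there are only finitely many admissible initializations, this is exactly what the lemma asks for.
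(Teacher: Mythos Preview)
Your proposal is honestly labelled a ``proof idea'', and it stops short of a proof at exactly the point you flag. The black-box reduction to Lemma~\ref{le:four} does not go through in general: the Lloyd iteration on the full $2k$-point instance is not the product of a $4$-means iteration on the eight-point block and a $(k-4)$-means iteration on the complement. After even one update the outside centroids move, and a block point sitting near the boundary of the block may then be closer to the adjacent outside centroid than to any of the four inside centroids; from that moment on the inside dynamics diverge from the isolated eight-point run, and the ``bad seed'' property of $J$ supplied by Lemma~\ref{le:four} no longer transfers. Your barrier argument excludes this only when the flanking gaps $b_{r-1}$ and $b_{r+3}$ are large in absolute terms, but the lemma must hold for \emph{every} distance configuration, and nothing forces any window of four consecutive pairs to be well isolated --- as you correctly observe, stability of $\Gamma$ gives only $b_j>\tfrac12|a_j-a_{j+1}|$, which can be arbitrarily small. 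Picking the window with the widest flanks does not help, since all $b_j$ may be uniformly tiny.

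Your fallback for the delicate regime --- drop the black box and rerun the explicit Section~\ref{sec:A} analysis on the full gap vector $(a_j,b_j)_j$ --- is not a reduction at all; it is a direct argument for $2k$ points. That is where the actual content of the lemma lives, and it is what the paper's Section~\ref{sec:B} carries out. So the reduction layer of your proposal buys nothing and cannot be completed as stated; to turn the sketch into a proof you must actually perform that case analysis (select the bad initialization as an explicit function of the gaps and follow the first iterations to exhibit a persistent cluster of size $\neq 2$), not merely announce that it can be done.
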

\begin{proof}
See Section \ref{sec:B}.
\end{proof}

\section{Proof of Lemma \ref{le:four}} \label{sec:A}
Let us investigate the case when $k=4$.

The proof will consist in investigating relations between node distances and showing that under some special initial seeding (step 1 of $k$-means) there is no chance that a clustering of 8 nodes into 4 pairs can occur. We will consider the following mutually excluding cases: So consider a set of $n=8$ nodes $n_1,\dots,n_8$ arranged in this order on a horizontal straight line from left to right with distances between them denoted as Denote the distances between the points as follows:

$d(n_1,n_2)=a_1$, $d(n_2,n_3)=p_{12}$,
$d(n_3,n_4)=a_2$, $d(n_4,n_5)=p_{23}$,
$d(n_5,n_6)=a_3$, $d(n_6,n_7)=p_{34}$,
$d(n_7,n_8)=a_4$.

This is illustrated symbolically in the figure below.

\par\noindent\rule{\textwidth}{0.4pt} \begin{minipage}{\textwidth} {\footnotesize \begin{verbatim}

n1       n2        n3       n4        n5       n6        n7       n8
[0 --a1-- 0 --p12-- 0 --a2-- 0 --p23-- 0 --a3-- 0 --p34-- 0 --a4-- 0]

\end{verbatim} } \end{minipage} \par\noindent\rule{\textwidth}{0.4pt}

The clustering, that we want to show is impossible under the selected seeding, is the following $\Gamma{}_{0}=\{\{n_1,n_2\},\{n_3,n_4\}, \{n_5,n_6\},\{n_7,n_8\}\}$.

\par\noindent\rule{\textwidth}{0.4pt} \begin{minipage}{\textwidth} {\footnotesize \begin{verbatim}

n1       n2        n3       n4        n5       n6        n7       n8
[0 --a1-- 0]--p12--[0 --a2-- 0]--p23--[0 --a3-- 0]--p34--[0 --a4-- 0]

\end{verbatim} } \end{minipage} \par\noindent\rule{\textwidth}{0.4pt}

By convention, the clusters are delimited with square brackets [] in the figures.

It is obvious that splitting a data set into 4 clusters of two elements can be performed only this way.

We will prove that whatever distances we take, there exists always the possibility of an initial seeding such that $k$-means-random will not find the clustering $\Gamma_0$ we want.

Note that if the clustering $\Gamma{}$ should exist at all, the following must hold:

$|a_1-a_2|< 2p_{12}$,
$|a_2-a_3|< 2p_{23}$,
$|a_3-a_4|< 2p_{34}$,

because otherwise the clusters will take over elements of the neighboring ones. We will consider sharp inequalities only because $k$-means makes a random choice of tiers, hence the probability of a failure seeding is only reduced by a fixed factor and cannot got arbitrarily close to 0 in case of tiers.

So let us proceed case by case.
\subsection{Case  $a_1< p_{12}< a_2$} \label{sec:AA}
Let us investigate the case when $k=4$
AND $a_1< p_{12}< a_2$. Let us choose the seeds (step 1 of $k$-means) $s_1=n_2, s_2=n_4, s_3=5,s_7$. After step 2, the clusters will form: either

\par\noindent\rule{\textwidth}{0.4pt} \begin{minipage}{\textwidth} {\footnotesize \begin{verbatim}

n1       n2        n3       n4        n5       n6        n7       n8
[0 --a1-- * --p12-- 0]--a2--[*]--p23--[*]--a3--[0 --p34-- * --a4-- 0]

\end{verbatim} } \end{minipage} \par\noindent\rule{\textwidth}{0.4pt}

or

\par\noindent\rule{\textwidth}{0.4pt} \begin{minipage}{\textwidth} {\footnotesize \begin{verbatim}

n1       n2        n3       n4        n5       n6        n7       n8
[0 --a1-- * --p12-- 0]--a2--[*]--p23--[* --a3-- 0]--p34--[* --a4-- 0]

\end{verbatim} } \end{minipage} \par\noindent\rule{\textwidth}{0.4pt}

(the asterisks illustrate the seeds). A cluster $\{n_1,n_2,n_3\}$ will form around $s_1$ and the center of this cluster will eventually lie to the right of $n_2$. Hence the next cluster to the right of it will have no possibility to gain control over $n_3$ because it is closer to $n_2$ than to $n_4$. Hence the relation $a_1< p_{12}< a_2$ under appropriate seeding prohibits emerging of $\Gamma_0$, the thesis of the Lemma holds in this case.

By symmetry, it holds also for $a_4< p_{34}< a_3$.
\subsection{Case  $a_1>p_{12}>a_2$} \label{sec:AB}
Let us investigate the case when $k=4$
AND $a_1>p_{12}>a_2$. Assume the following seeding: $s_1=n_1, s_2=n_3, s_3=n_5, s_4=n_7$. After step 2, one of the following clusterings will emerge:

\par\noindent\rule{\textwidth}{0.4pt} \begin{minipage}{\textwidth} {\footnotesize \begin{verbatim}

n1       n2        n3       n4        n5       n6        n7       n8
[*]--a1--[0 --p12-- *]--a2--[0 --p23-- * --a3-- 0]--p34--[* --a4-- 0]

\end{verbatim} } \end{minipage} \par\noindent\rule{\textwidth}{0.4pt}

\par\noindent\rule{\textwidth}{0.4pt} \begin{minipage}{\textwidth} {\footnotesize \begin{verbatim}

n1       n2        n3       n4        n5       n6        n7       n8
[*]--a1--[0 --p12-- *]--a2--[0 --p23-- *]--a3--[0 --p34-- * --a4-- 0]

\end{verbatim} } \end{minipage} \par\noindent\rule{\textwidth}{0.4pt}

\par\noindent\rule{\textwidth}{0.4pt} \begin{minipage}{\textwidth} {\footnotesize \begin{verbatim}

n1       n2        n3       n4        n5       n6        n7       n8
[*]--a1--[0 --p12-- * --a2-- 0]--p23--[* --a3-- 0]--p34--[* --a4-- 0]

\end{verbatim} } \end{minipage} \par\noindent\rule{\textwidth}{0.4pt}

\par\noindent\rule{\textwidth}{0.4pt} \begin{minipage}{\textwidth} {\footnotesize \begin{verbatim}

n1       n2        n3       n4        n5       n6        n7       n8
[*]--a1--[0 --p12-- * --a2-- 0]--p23--[*]--a3--[0 --p34-- * --a4-- 0]

\end{verbatim} } \end{minipage} \par\noindent\rule{\textwidth}{0.4pt}

As visible, the following clusters will form: The first cluster $\{n_1\}$, the second containing at least $n_2,n_3$ and at largest extent also $n_4$, the third at least $n_5$ and the forth at least $n_7,n_8$.

During subsequent iteration the following occurs: The forth cluster keeps $n_7,n_8$ forever. Therefore the third cluster center will be either in the middle of $[n_5,n_6]$ or to the left of it and it will be so as long as the second cluster does not get control over $n_5$. The second cluster center lies to the left of $n_3$. Therefore the first cluster does not get control over $n_2$. Note that the distance of the center of the third cluster to $n_5$ is less than $a_3/2$, and that of the second cluster more than $a_2+p_{23}$. Therefore in the next step the second cluster will not get $n_5$ and so its distance will remain above $a_2+p_{23}$ and it will not change as long as it does not get control over $n_5$, but it cannot and so this will stay forever so. Under these circumstances the distance of the second cluster center to $n_2$ will be smaller than that of the first and so it will stay forever.

Therefore a cluster $\{n_1,n_2\}\in \Gamma{}_0$ cannot form. The thesis of the Lemma holds in this case.

By symmetry same applies to $a_4>p_{34}>a_3$.

We need to check $a_1>p_{12}< a_2$ and $a_1< p_{12}>a_2$
\subsection{Case  $a_1>p_{12}<a_2$} \label{sec:AC}
Let us investigate the case when $k=4$
AND $a_1>p_{12}<a_2$.
\subsubsection{Case: $a_1< (2p_{12}+a_2)/3$} \label{sec:ACA}
Let us investigate the case when $k=4$
AND $a_1>p_{12}<a_2$
AND $a_1< (2p_{12}+a_2)/3$. Consider the seeding $s_1=n_2, s_2=n_4, s_3=n_5, s_4=n_7$. In step 2 one of the clusterings will occur.

\par\noindent\rule{\textwidth}{0.4pt} \begin{minipage}{\textwidth} {\footnotesize \begin{verbatim}

n1       n2        n3       n4        n5       n6        n7       n8
[0 --a1-- * --p12-- 0]--a2--[*]--p23--[* --a3-- 0]--p34--[* --a4-- 0]

\end{verbatim} } \end{minipage} \par\noindent\rule{\textwidth}{0.4pt}

\par\noindent\rule{\textwidth}{0.4pt} \begin{minipage}{\textwidth} {\footnotesize \begin{verbatim}

n1       n2        n3       n4        n5       n6        n7       n8
[0 --a1-- * --p12-- 0]--a2--[*]--p23--[*]--a3--[0 --p34-- * --a4-- 0]

\end{verbatim} } \end{minipage} \par\noindent\rule{\textwidth}{0.4pt}

the first cluster will consist of $n_1,n_2,n_3$ and the second only of $n_4$. In order for the second cluster to gain control over $n_3$, the following condition needs to hold $a_2< (2p_{12}+a_1)/3$ because otherwise the second cluster will never get $n_3$ (as its center will be at $n_4$ or to the rigyht of it). But $a_2< (2p_{12}+a_1)/3< 3a_1/3=a_1$ which contradicts our assumption that $a_1< (2p_{12}+a_2)/3< a_2$. The thesis of the Lemma holds in this case.
\subsubsection{Case: $a_1> (2p_{12}+a_2)/3$} \label{sec:ACB}
Let us investigate the case when $k=4$
AND $a_1>p_{12}<a_2$
AND $a_1> (2p_{12}+a_2)/3$. Assume the following seeding: $s_1=n_1, s_2=n_3, s_3=n_5, s_4=n_7$. Then the following clusters may occur in step 2 o $k$-means.:

\par\noindent\rule{\textwidth}{0.4pt} \begin{minipage}{\textwidth} {\footnotesize \begin{verbatim}

n1       n2        n3       n4        n5       n6        n7       n8
[*]--a1--[0 --p12-- *]--a2--[0 --p23-- *]--a3--[0 --p34-- * --a4-- 0]

\end{verbatim} } \end{minipage} \par\noindent\rule{\textwidth}{0.4pt}

\par\noindent\rule{\textwidth}{0.4pt} \begin{minipage}{\textwidth} {\footnotesize \begin{verbatim}

n1       n2        n3       n4        n5       n6        n7       n8
[*]--a1--[0 --p12-- *]--a2--[0 --p23-- * --a3-- 0]--p34--[* --a4-- 0]

\end{verbatim} } \end{minipage} \par\noindent\rule{\textwidth}{0.4pt}

\par\noindent\rule{\textwidth}{0.4pt} \begin{minipage}{\textwidth} {\footnotesize \begin{verbatim}

n1       n2        n3       n4        n5       n6        n7       n8
[*]--a1--[0 --p12-- * --a2-- 0]--p23--[*]--a3--[0 --p34-- * --a4-- 0]

\end{verbatim} } \end{minipage} \par\noindent\rule{\textwidth}{0.4pt}

\par\noindent\rule{\textwidth}{0.4pt} \begin{minipage}{\textwidth} {\footnotesize \begin{verbatim}

n1       n2        n3       n4        n5       n6        n7       n8
[*]--a1--[0 --p12-- * --a2-- 0]--p23--[* --a3-- 0]--p34--[* --a4-- 0]

\end{verbatim} } \end{minipage} \par\noindent\rule{\textwidth}{0.4pt}

Clusters 3 and 4 will form out of at least nodes $n_5,\dots, n_8$ The forth cluster keeps $n_7,n_8$ forever. Therefore the third cluster center will be either in the middle of $[n_5,n_6]$ or to the left of it and it will be so as long as the second cluster does not get control over $n_5$.

So after centroid update, the second cluster center lies to the left of the middle of $[n_3,n_4]$. Note that the distance of the center of the third cluster to $n_5$ is less than $a_3/2$, and that of the second cluster more than $a_2/2+p_{23}$. Therefore in the next step the second cluster will not get $n_5$ and so its distance will remain above $a_2/2+p_{23}$ and it will not change as long as it does not get control over $n_5$, but it cannot and so this will stay forever so.

The first cluster can capture $n_2$ in the first step only if $a_1< (2p_{12}+a_2)/3$. But we assumed the contrary, that is that $a_1> (2p_{12}+a_2)/3$. So it will never capture it.

The thesis of the Lemma holds in this case.

Therefore, combined with the previous case, thesis of the Lemma holds for $a_1>p_{12}< a_2$ altogether. By symmetry, it holds for $a_4>p_{34}< a_3$ too.
\subsection{Case    $a_1< p_{12}>a_2$} \label{sec:AD}
Let us investigate the case when $k=4$
AND $a_1< p_{12}>a_2$. By symmetry, also $a_4< p_{34}>a_3$, because all the other relations of these distances were already discussed and the Lemma held for them.
\subsubsection{Case: $a_2< p_{23}< a_3$} \label{sec:ADA}
Let us investigate the case when $k=4$
AND $a_1< p_{12}>a_2$
AND $a_2< p_{23}< a_3$. Let us look at the seeding $s_1=n_2, s_2=n_4, s_3=n_6, s_4=n_7$. One of the following clusterings will emerge in step 2.

\par\noindent\rule{\textwidth}{0.4pt} \begin{minipage}{\textwidth} {\footnotesize \begin{verbatim}

n1       n2        n3       n4        n5       n6        n7       n8
[0 --a1-- *]--p12--[0 --a2-- * --p23-- 0]--a3--[*]--p34--[* --a4-- 0]

\end{verbatim} } \end{minipage} \par\noindent\rule{\textwidth}{0.4pt}

Cluster 1 gets $\{n_1,n_2\}$, and cluster 2 gets $\{n_3,n_4,n_5\}$ or Cluster 1 gets $\{n_1,n_2,n_3\}$, and cluster 2 gets $\{n_4,n_5\}$ In subsequent steps Cluster 1 keeps $\{n_1,n_2\}$, but cluster 2 may loose $n_3$ to cluster 1. Hence the distance of the second cluster center to $n_5$ will be equal or smaller than $(2p_{23}+a_2)/3< p_{23}$ hence the third cluster will never gain control over $n_5$ as its distance is at least $a_3$. The thesis of the Lemma holds in this case.
\subsubsection{Case: $ a_2>p_{23}>a_3$} \label{sec:ADB}
Let us investigate the case when $k=4$
AND $a_1< p_{12}>a_2$
AND $ a_2>p_{23}>a_3$. By a symmetric argument, The thesis of the Lemma holds in the case $ a_2>p_{23}>a_3$.

\par\noindent\rule{\textwidth}{0.4pt} \begin{minipage}{\textwidth} {\footnotesize \begin{verbatim}

n1       n2        n3       n4        n5       n6        n7       n8
[0 --a1-- *]--p12--[*]--a2--[0 --p23-- * --a3-- 0]--p34--[* --a4-- 0]

\end{verbatim} } \end{minipage} \par\noindent\rule{\textwidth}{0.4pt}

\subsubsection{Case: $ a_2>p_{23}< a_3$} \label{sec:ADC}
Let us investigate the case when $k=4$
AND $a_1< p_{12}>a_2$
AND $ a_2>p_{23}< a_3$.
\subsubsubsection{Case: $a_2> (2p_{23}+a_3)/3$} \label{sec:ADCA}
Let us investigate the case when $k=4$
AND $a_1< p_{12}>a_2$
AND $ a_2>p_{23}< a_3$
AND $a_2> (2p_{23}+a_3)/3$. Under the seeding $s_1=n_2, s_2=n_3, s_3=n_5, s_4=n_7$,

\par\noindent\rule{\textwidth}{0.4pt} \begin{minipage}{\textwidth} {\footnotesize \begin{verbatim}

n1       n2        n3       n4        n5       n6        n7       n8
[0 --a1-- *]--p12--[*]--a2--[0 --p23-- * --a3-- 0]--p34--[* --a4-- 0]

\end{verbatim} } \end{minipage} \par\noindent\rule{\textwidth}{0.4pt}

the first cluster will form of $n_1,n_2$, the forth of $n_7,n_8$. They will never loose control over these nodes. The second cluster will not capture $n_4$, because its distance to it amounts to $a_2$, and the distance of the third cluster center to it amounts to at most $(2p_{23}+a_3)/3$ which is smaller than $a_2$ by the assumption.

The thesis of the Lemma holds.
\subsubsubsection{Case: $a_2< (2p_{23}+a_3)/3$} \label{sec:ADCB}
Let us investigate the case when $k=4$
AND $a_1< p_{12}>a_2$
AND $ a_2>p_{23}< a_3$
AND $a_2< (2p_{23}+a_3)/3$.

\par\noindent\rule{\textwidth}{0.4pt} \begin{minipage}{\textwidth} {\footnotesize \begin{verbatim}

n1       n2        n3       n4        n5       n6        n7       n8
[0 --a1-- *]--p12--[0 --a2-- *]--p23--[0 --a3-- *]--p34--[* --a4-- 0]

\end{verbatim} } \end{minipage} \par\noindent\rule{\textwidth}{0.4pt}

Under the seeding $s_1=n_2, s_2=n_4, s_3=n_6, s_4=n_7$, the first cluster will form of $n_1,n_2$, the forth of $n_7,n_8$. They will never loose control over these nodes. The third cluster will capture $n_5$ only if $a_3< (2p_{23}+a_2)/3< a_2$. But we assumed $a_2< (2p_{23}+a_3)/3$ which implies that $a_2<(2p_{23}+a_3)/3< a_3$. These two requirements are contradictory. The thesis of the Lemma holds. Combinwed with the former case, thesis of the Lemma holds already when $a_2>p_{23}< a_3$.
\subsubsection{Case: $ a_2< p_{23}>a_3$} \label{sec:ADD}
Let us investigate the case when $k=4$
AND $a_1< p_{12}>a_2$
AND $ a_2< p_{23}>a_3$. This case means that, informally speaking, the gap between clusters has to be bigger than each the distance within the cluster neighbouring with the gap.

Consider the seeding S1: $s_1=n_2, s_2=n_5, s_3=n_7, s_4=n_8$. One of the following clustering may emerge in Step 2.

\par\noindent\rule{\textwidth}{0.4pt} \begin{minipage}{\textwidth} {\footnotesize \begin{verbatim}

n1       n2        n3       n4        n5       n6        n7       n8
[0 --a1-- * --p12-- 0 --a2-- 0]--p23--[* --a3-- 0]--p34--[*]--a4--[*]

\end{verbatim} } \end{minipage} \par\noindent\rule{\textwidth}{0.4pt}

\par\noindent\rule{\textwidth}{0.4pt} \begin{minipage}{\textwidth} {\footnotesize \begin{verbatim}

n1       n2        n3       n4        n5       n6        n7       n8
[0 --a1-- * --p12-- 0]--a2--[0 --p23-- * --a3-- 0]--p34--[*]--a4--[*]

\end{verbatim} } \end{minipage} \par\noindent\rule{\textwidth}{0.4pt}

\par\noindent\rule{\textwidth}{0.4pt} \begin{minipage}{\textwidth} {\footnotesize \begin{verbatim}

n1       n2        n3       n4        n5       n6        n7       n8
[0 --a1-- *]--p12--[0 --a2-- 0 --p23-- * --a3-- 0]--p34--[*]--a4--[*]

\end{verbatim} } \end{minipage} \par\noindent\rule{\textwidth}{0.4pt}

The first cluster captures for sure $n_1,n_2$ and never looses them. If the first cluster would capture $n_1,\dots,n_4$, it would not lose it in further iteration. Thesis holds. So we are left with the clusterings $\Gamma{}_1=\{\{n_1,n_2,n_3\}, \{n_4,n_5,n_6\}, \{n_7\}, \{n_8\}\}$. $\Gamma{}_3=\{\{n_1,n_2\}, \{n_3,n_4,n_5,n_6\}, \{n_7\}, \{n_8\}\}$.

By symmetric seeding S2 $s_1=n_1, s_2=n_2, s_3=n_4, s_4=n_7$. we need to consider only $\Gamma{}_2=\{\{n_1\},\{n_2\},\{n_3, n_4,n_5\},\{n_6,n_7,n_8\}\}$. $\Gamma{}_4=\{\{n_1\},\{n_2\},\{n_3, n_4,n_5,\},\{n_7,n_8\}\}$ after the initialization step.

\par\noindent\rule{\textwidth}{0.4pt} \begin{minipage}{\textwidth} {\footnotesize \begin{verbatim}

n1       n2        n3       n4        n5       n6        n7       n8
[*]--a1--[*]--p12--[0 --a2-- * --p23-- 0]--a3--[0 --p34-- * --a4-- 0]

\end{verbatim} } \end{minipage} \par\noindent\rule{\textwidth}{0.4pt}

\par\noindent\rule{\textwidth}{0.4pt} \begin{minipage}{\textwidth} {\footnotesize \begin{verbatim}

n1       n2        n3       n4        n5       n6        n7       n8
[*]--a1--[*]--p12--[0 --a2-- * --p23-- 0 --a3-- 0]--p34--[* --a4-- 0]

\end{verbatim} } \end{minipage} \par\noindent\rule{\textwidth}{0.4pt}

If after initialization with S2 we would obtain clustering $\Gamma_4$, then this means that $p_{34}>p_{23}+a_3$. On the other hand, it is obvious that if under seeding S1 we obtain any of the clusterings $\Gamma_1$ or $\Gamma_3$, the second cluster will never get node $n_2$, therefore its center weill reside to the riogyht of $n_4$, therefore the third cluster would never capture $n_6$. So the thesis of Lemma holds in this case . So we do not need to comnsider $\Gamma_4$ any more.

By symmwetry, also if $\Gamma_3$ occurs under the seeding S1, the Lemma holds.

So we need to consider S1 leading ton $\Gamma_1$ and S2 leading to $\Gamma_2$.

In case of $\Gamma{}_1$ after S1, in the step 3 can relocate in such a way that step 2 in the next iteration cluster 1 can take over $n_4$.

\par\noindent\rule{\textwidth}{0.4pt} \begin{minipage}{\textwidth} {\footnotesize \begin{verbatim}

n1       n2        n3       n4        n5       n6        n7       n8
[0 --a1-- 0 --p12-- 0 --a2-- 0]--p23--[0 --a3-- 0]--p34--[*]--a4--[*]

\end{verbatim} } \end{minipage} \par\noindent\rule{\textwidth}{0.4pt}

In this case cluster 2 will never regain $n_4$. The Thesis holds.

The other possibility is that instead in the step 2 of the nexct iteration either cluster 2 takes over $n_3$ or $n_3$ remains in cluster 1.

\par\noindent\rule{\textwidth}{0.4pt} \begin{minipage}{\textwidth} {\footnotesize \begin{verbatim}

n1       n2        n3       n4        n5       n6        n7       n8
[0 --a1-- 0 --p12-- 0]--a2--[0 --p23-- 0 --a3-- 0]--p34--[*]--a4--[*]

\end{verbatim} } \end{minipage} \par\noindent\rule{\textwidth}{0.4pt}

\par\noindent\rule{\textwidth}{0.4pt} \begin{minipage}{\textwidth} {\footnotesize \begin{verbatim}

n1       n2        n3       n4        n5       n6        n7       n8
[0 --a1-- 0]--p12--[0 --a2-- 0 --p23-- 0 --a3-- 0]--p34--[*]--a4--[*]

\end{verbatim} } \end{minipage} \par\noindent\rule{\textwidth}{0.4pt}

By analogy in case of $\Gamma{}_2$ after S2 in the next iteration in step 2 we need to consider only either cluster 3 takes over $n_6$ or $n_6$ remains in cluister 4.

\par\noindent\rule{\textwidth}{0.4pt} \begin{minipage}{\textwidth} {\footnotesize \begin{verbatim}

n1       n2        n3       n4        n5       n6        n7       n8
[*]--a1--[*]--p12--[0 --a2-- 0 --p23-- 0]--a3--[0 --p34-- 0 --a4-- 0]

\end{verbatim} } \end{minipage} \par\noindent\rule{\textwidth}{0.4pt}

\par\noindent\rule{\textwidth}{0.4pt} \begin{minipage}{\textwidth} {\footnotesize \begin{verbatim}

n1       n2        n3       n4        n5       n6        n7       n8
[*]--a1--[*]--p12--[0 --a2-- 0 --p23-- 0 --a3-- 0]--p34--[0 --a4-- 0]

\end{verbatim} } \end{minipage} \par\noindent\rule{\textwidth}{0.4pt}

\subsubsubsection{Case: under $\Gamma{}_2$ cluster 3 takes over node $n_6$ and at the same time under $\Gamma{}_1$ cluster 2 takes over node $n_3$} \label{sec:ADDA}
Let us investigate the case when $k=4$
AND $a_1< p_{12}>a_2$
AND $ a_2< p_{23}>a_3$
AND under $\Gamma{}_2$ cluster 3 takes over node $n_6$ and at the same time under $\Gamma{}_1$ cluster 2 takes over node $n_3$. So consider the situation that under $\Gamma{}_2$ cluster 3 takes over node $n_6$

\par\noindent\rule{\textwidth}{0.4pt} \begin{minipage}{\textwidth} {\footnotesize \begin{verbatim}

n1       n2        n3       n4        n5       n6        n7       n8
[*]--a1--[*]--p12--[0 --a2-- 0 --p23-- 0]--a3--[0 --p34-- 0 --a4-- 0]

\end{verbatim} } \end{minipage} \par\noindent\rule{\textwidth}{0.4pt}

\par\noindent\rule{\textwidth}{0.4pt} \begin{minipage}{\textwidth} {\footnotesize \begin{verbatim}

n1       n2        n3       n4        n5       n6        n7       n8
[*]--a1--[*]--p12--[0 --a2-- 0 --p23-- 0 --a3-- 0]--p34--[0 --a4-- 0]

\end{verbatim} } \end{minipage} \par\noindent\rule{\textwidth}{0.4pt}

This implies

$$(2p_{23}+a_2)/3+a_3 <  (2p_{34}+a_4)/3$$
$$ 2p_{23}+a_2  +3a_3 <  (2p_{34}+a_4) $$
$$( 2p_{23}+a_2  +3a_3)/4 <  (2p_{34}+a_4) /4 < 3p_{34}/4 < p_{34}$$

At the same time under $\Gamma{}_1$ let cluster 2 take over node $n_3$.

\par\noindent\rule{\textwidth}{0.4pt} \begin{minipage}{\textwidth} {\footnotesize \begin{verbatim}

n1       n2        n3       n4        n5       n6        n7       n8
[0 --a1-- 0 --p12-- 0]--a2--[0 --p23-- 0 --a3-- 0]--p34--[*]--a4--[*]

\end{verbatim} } \end{minipage} \par\noindent\rule{\textwidth}{0.4pt}

\par\noindent\rule{\textwidth}{0.4pt} \begin{minipage}{\textwidth} {\footnotesize \begin{verbatim}

n1       n2        n3       n4        n5       n6        n7       n8
[0 --a1-- 0]--p12--[0 --a2-- 0 --p23-- 0 --a3-- 0]--p34--[*]--a4--[*]

\end{verbatim} } \end{minipage} \par\noindent\rule{\textwidth}{0.4pt}

This implies:

$$(2p_{23}+a_3)/3+a_2 <  (2p_{12}+a_1)/3 $$
$$(2p_{23}+a_3+3a_2)/4<   p_{12}  $$

The conditions $(2p_{23}+a_3+3a_2)/4< p_{12} $ and $( 2p_{23}+a_2 +3a_3)/4 < p_{34}$ mean that the cluster consisting of nodes $n_3,n_4,n_5,n_6$ will never lose any of its members to the clusters on either of its sides. The thesis holds.
\subsubsubsection{Case: that under $\Gamma{}_2$ cluster 3 takes over node $n_6$ and at the same time under $\Gamma{}_1$ clusters 1 and 2 are not taking over mutually their elements} \label{sec:ADDB}
Let us investigate the case when $k=4$
AND $a_1< p_{12}>a_2$
AND $ a_2< p_{23}>a_3$
AND that under $\Gamma{}_2$ cluster 3 takes over node  $n_6$ and at the same time under $\Gamma{}_1$ clusters 1 and 2 are not taking over mutually their elements.

So consider the situation that under $\Gamma{}_2$ cluster 3 takes over node $n_6$

\par\noindent\rule{\textwidth}{0.4pt} \begin{minipage}{\textwidth} {\footnotesize \begin{verbatim}

n1       n2        n3       n4        n5       n6        n7       n8
[*]--a1--[*]--p12--[0 --a2-- 0 --p23-- 0]--a3--[0 --p34-- 0 --a4-- 0]

\end{verbatim} } \end{minipage} \par\noindent\rule{\textwidth}{0.4pt}

\par\noindent\rule{\textwidth}{0.4pt} \begin{minipage}{\textwidth} {\footnotesize \begin{verbatim}

n1       n2        n3       n4        n5       n6        n7       n8
[*]--a1--[*]--p12--[0 --a2-- 0 --p23-- 0 --a3-- 0]--p34--[0 --a4-- 0]

\end{verbatim} } \end{minipage} \par\noindent\rule{\textwidth}{0.4pt}

This implies

$$(2p_{23}+a_2)/3+a_3 <  (2p_{34}+a_4)/3$$
$$( 2p_{23}+a_2  +3a_3)/4 <  (2p_{34}+a_4) /4$$

At the same time assume that under $\Gamma{}_1$ clusters 1 and 2 are not taking over mutually their elements.

\par\noindent\rule{\textwidth}{0.4pt} \begin{minipage}{\textwidth} {\footnotesize \begin{verbatim}

n1       n2        n3       n4        n5       n6        n7       n8
[0 --a1-- 0 --p12-- 0]--a2--[0 --p23-- 0 --a3-- 0]--p34--[*]--a4--[*]

\end{verbatim} } \end{minipage} \par\noindent\rule{\textwidth}{0.4pt}

\par\noindent\rule{\textwidth}{0.4pt} \begin{minipage}{\textwidth} {\footnotesize \begin{verbatim}

n1       n2        n3       n4        n5       n6        n7       n8
[0 --a1-- 0 --p12-- 0]--a2--[0 --p23-- 0 --a3-- 0]--p34--[*]--a4--[*]

\end{verbatim} } \end{minipage} \par\noindent\rule{\textwidth}{0.4pt}

But this means that

$$( 2p_{23}+a_2  +3a_3)/4 >  ( p_{23}+2a_3)/3$$

Taking into account that $(2p_{34}+a_4) /4 < p_{34}$, we obtain from the above equations that therefore $( p_{23}+2a_3)/3< p_{34}$. This means that under $\Gamma{}_1$ the cluster 3 cannot take over $n_6$ so that the clustering $\Gamma{}_1$ remain stable. The thesis holds.

By symmetry the situation that under $\Gamma{}_1$ cluster 2 takes over node $n_3$ and at the same time under $\Gamma{}_2$ clusters 3 and 4 are not taking over mutually their elements supports the thesis also.
\subsubsubsection{Case: under $\Gamma{}_1$ clusters 1 and 2 are not taking over mutually their elements and at the same time under $\Gamma{}_2$ clusters 3 and 4 are not taking over mutually their elements} \label{sec:ADDC}
Let us investigate the case when $k=4$
AND $a_1< p_{12}>a_2$
AND $ a_2< p_{23}>a_3$
AND under $\Gamma{}_1$ clusters 1 and 2 are not taking over mutually their elements and at the same time under $\Gamma{}_2$ clusters 3 and 4 are not taking over mutually their elements. Consider the following seeding S7: $s_1=n_4, s_2=n_6, s_3=n_7, s_4=n_8$.

\par\noindent\rule{\textwidth}{0.4pt} \begin{minipage}{\textwidth} {\footnotesize \begin{verbatim}

n1       n2        n3       n4        n5       n6        n7       n8
[0 --a1-- 0 --p12-- 0 --a2-- *]--p23--[0 --a3-- *]--p34--[*]--a4--[*]

\end{verbatim} } \end{minipage} \par\noindent\rule{\textwidth}{0.4pt}

We get the clustering $\Gamma{}_7=\{\{n_1,n_2,n_3,n_4\},\{n_5,n_6\},\{n_7\},\{n_8\}\}$. We need to prevent the first cluster to keep these initial elements, therefore the following must hold:

$$(a_1+2p_{12}+3a_2)/4>p_{23}+a_3/2$$
hence
$$(3p_{12}+3a_2)/4>p_{23}+a_3/2$$
$$3/4p_{12}+1/4a_2>p_{23}+a_3/2-a_2/2$$

and by analogy under the seeding: $s_1=n_1, s_2=n_2, s_3=n_3, s_4=n_5$

\par\noindent\rule{\textwidth}{0.4pt} \begin{minipage}{\textwidth} {\footnotesize \begin{verbatim}

n1       n2        n3       n4        n5       n6        n7       n8
[*]--a1--[*]--p12--[* --a2-- 0]--p23--[* --a3-- 0 --p34-- 0 --a4-- 0]

\end{verbatim} } \end{minipage} \par\noindent\rule{\textwidth}{0.4pt}

implying the clustering $\Gamma{}_4=\{\{n_1\},\{n_2\},\{n_3,n_4\},\{n_5,n_6,n_7,n_8\}\}$.

$$3/4p_{34}+1/4a_3>p_{23}+a_2/2-a_3/2$$

Either $a_2/2-a_3/2\ge 0$ or $a_3/2-a_2/2\ge 0$. Assume the latter without restraining the generality Hence

$$3/4p_{12}+1/4a_2>p_{23} $$
$$p_{12} >p_{23}$$

Let us consider the clustering $\Gamma{}_2$. In order for the cluster 2 to capture node $n_3$ the following needs to hold:

$$p_{12}< (2a_2+p_{23})/3< 3p_{23}/3=p_{23}$$

which contradicts the previously derived condition $p_{12} >p_{23}$. This case supports the thesis either.

Hence the violation of $k$-richness in the probabilistic sense for $k=4$ is proven.
\section{Proof of Lemma \ref{le:moreThanFour}} \label{sec:B}
Let us investigate the case when $k>4$. For $k$ greater than 4, consider clustering into $k$ two element clusters. The nodes shall be denoted as above $n_1,\dots,n_{2k}$, the distance between elements of cluster $j$ shall be $a_j$ and the distance between cluster $j$ and $j+1$ shall be denoted by $p_{j,j+1}$. If the clustering should exist at all, the following must hold: $|a_j-a_{j+1}|< 2p_{j,j+1}$.

\par\noindent\rule{\textwidth}{0.4pt} \begin{minipage}{\textwidth} {\footnotesize \begin{verbatim}

n1       n2       n3       n4       n5       n6       n7       n8       
[0 --a1-- 0 --p--- 0 --a2-- 0 --p--- 0 --a3-- 0 --p--- 0 --a4-- 0 --p---...


   n        n        n        n        n        n        n        n        
    2i-3     2i-2     2i-1     2i       2i+1     2i+2     2i+3     2i+4    
... 0 --a -- 0 --p--- 0 --a -- 0 --p--- 0 --a -- 0 --p--- 0 --a -- 0 
         i-1               i                 i+1               i+2   


   n        n        n        n        n        n        n        
    2k-6     2k-5     2k-4     2k-3     2k-2     2k-1     2k 
...[0 --p--- 0 --a -- 0 --p--- 0 --a -- 0 --p--- 0 --a -- 0]
                  k-2               k-1               k    



\end{verbatim} } \end{minipage} \par\noindent\rule{\textwidth}{0.4pt}

we will refrain from showing indexes of $p$ in the figures as they are self-evident.

Let us look at the situation when $p_{i-1,i}>a_i$. Consider a seeding such that for $j>=i$ $s_j=n_{2j-1}$ for some $i$.

\par\noindent\rule{\textwidth}{0.4pt} \begin{minipage}{\textwidth} {\footnotesize \begin{verbatim}

   n        n        n        n        n        n        n        n        
    2i-3     2i-2     2i-1     2i       2i+1     2i+2     2i+3     2i+4    
... * --a -- 0]--p---[* --a -- 0]--p---[*]--a --[0 --p--- *]--a --[0 
         i-1               i                 i+1               i+2   


   n        n        n        n        n        n        n        n        
    2k-7     2k-6     2k-5     2k-4     2k-3     2k-2     2k-1     2k 
...[* --a -- 0]--p---[* --a -- 0]--p---[* --a -- 0]--p---[* --a -- 0]
         k-3               k-2               k-1               k    



\end{verbatim} } \end{minipage} \par\noindent\rule{\textwidth}{0.4pt}

This ensures that under no step of $k$-means the $j^{th}$ cluster ($j\ge i$ will contain node $n_{2j+1}$. This can be shown by induction. Directly after seeding, in step 2 of $k$-means, the cluster $k$ will contain at least nodes $n_{2k-1}$ and $n_{2k}$ and maybe $n_{2k-2}$, but it cannot contain the node $n_{2k+1}$ as there is no such node. The cluster $j, i<= j< k$ contains at least the node $n_{2j-1}$, and maybe $n_{2j}$ if not contained in the next cluster, maybe $n_{2j-2}$ if not contained in the previous cluster. It does not contain $n_{2j+1}$ because there is the next seed.

Now consider what happens when cluster centres of clusters emerged this way are computed (step 3 of $k$-means). Define the vector $v_{j,1}$ as one from the centre of cluster $j$ to $n_{2j+1}$. It amounts to at least $[a_j/2+p_{j,j+1}]$ at this moment, for $j>4$. Define the vector $v_{j+1,2}$ as one from $n_{2j+1}$ to the centre of cluster $j+1$. It amounts after the initial step to at most $[a_{j+1}/2]$ ($4< j< k$). Therefore cluster $j$ cannot expand in the next step to capture $n_{2j+1}$, because $|a_j-a_{j+1}|< 2p_{j,j+1}$ implies $-a_j+a_{j+1}< 2p_{j,j+1}$, that is $ a_{j+1}/2 < p_{j,j+1}+a_j/2$. Therefore the vector $v_{j,1}$ will not decrease and $v_{j,2}$ will not increase because $v_{j,1}+v_{j,2}$ is constant for $j$ (distance between $n_{2j+1}$ and $n_{2j-1}$) - this is shown by induction on $j=k-1,k-2,\dots,i$ under the condition that cluster $i-1$ would not capture $n_{2i-1}$. Cluster $i-1$ will not capture $n_{2i-1}$, because the cluster $i$ cannot capture $n_{2i+1}$, therefore its center will lie to the left of $n_{2i}$, therefore its distance to $n_{2i-1}$ will amount to $a_i$ at most, while the distance of cluster $i-1$ center will be at distance of at least $p_{i-1,i}$ from $n_{2i-1}$, and by assumption $p_{i-1,i}>a_i$. We will exploit this partial seeding below.

\subsection{Case    $a_1< p_{12}< a_2$} \label{sec:BA}
Let us investigate the case when $k>4$
AND $a_1< p_{12}< a_2$.

Let us choose the seeds $s_1=n_2, s_2=n_4$ and $s_3,s_4,....,s_k$ at nodes $n_{2j-1}$ (that is to the right of $s_2$. )

\par\noindent\rule{\textwidth}{0.4pt} \begin{minipage}{\textwidth} {\footnotesize \begin{verbatim}

n1       n2       n3       n4       n5       n6       n7       n8       
[0 --a1-- * --p--- 0]--a2--[*]--p---[* --a3-- 0]--p---[* --a4-- 0]--p---...


   n        n        n        n        n        n        n        n        
    2k-7     2k-6     2k-5     2k-4     2k-3     2k-2     2k-1     2k 
...[* --a -- 0]--p---[* --a -- 0]--p---[* --a -- 0]--p---[* --a -- 0]
         k-3               k-2               k-1               k    



\end{verbatim} } \end{minipage} \par\noindent\rule{\textwidth}{0.4pt}

A cluster $\{n_1,n_2,n_3\}$ will form around $s_1$ and the center of this cluster will eventually lie to the right of $n_2$. Hence the next cluster to the right of it will have no possibility to gain control over $n_3$ because it is closer to $n_2$ than to $n_4$. Hence the relation $a_1< p_{12}< a_2$ supports the thesis.
\subsection{Case    $a_1>p_{12}>a_2$} \label{sec:BB}
Let us investigate the case when $k>4$
AND $a_1>p_{12}>a_2$. Assume the following seeding: $s_1=n_1,$, $s_j =n_{2j-1}$ for $j=2,\dots,k$.

\par\noindent\rule{\textwidth}{0.4pt} \begin{minipage}{\textwidth} {\footnotesize \begin{verbatim}

n1       n2       n3       n4       n5       n6       n7       n8       
[*]--a1--[0 --p--- *]--a2--[0 --p--- * --a3-- 0]--p---[* --a4-- 0]--p---...


   n        n        n        n        n        n        n        n        
    2k-7     2k-6     2k-5     2k-4     2k-3     2k-2     2k-1     2k 
...[* --a -- 0]--p---[* --a -- 0]--p---[* --a -- 0]--p---[* --a -- 0]
         k-3               k-2               k-1               k    



\end{verbatim} } \end{minipage} \par\noindent\rule{\textwidth}{0.4pt}

This is the case discussed above in the introduction where $i=2$. Obviously, cluster 1 cannot take over $n_2$, even if second cluster gets $n_4$ because the second cluster would not get $n_5$. Therefore a cluster $\{n_1,n_2\}\in \Gamma{}$ cannot form.

Therefore the relation $a_1>p_{12}>a_2$ also supports the thesis.
\subsection{Case    $ a_m< p_{m,m+1 }< a_{m+1}$ for some $m$} \label{sec:BC}
Let us investigate the case when $k>4$
AND $ a_m< p_{m,m+1 }< a_{m+1}$ for some $m$. Let us now discuss the case $ a_m< p_{m,m+1 }< a_{m+1}$. Let us look at the seeding $s_j=n_{2j}$ for $j=1,...,m-1$, $s_m=n_{2m}, s_{m+1}=n_{2m+2}$, $s_{j}=n_{2(j)-1}$ for $j=m+2,...,k$.

\par\noindent\rule{\textwidth}{0.4pt} \begin{minipage}{\textwidth} {\footnotesize \begin{verbatim}

n1       n2        n3       n4        n5       n6        n7       n8
[0 --a1-- *]--p12--[0 --a2-- * --p23-- 0]--a3--[*]--p34--[* --a4-- 0]

\end{verbatim} } \end{minipage} \par\noindent\rule{\textwidth}{0.4pt}

\par\noindent\rule{\textwidth}{0.4pt} \begin{minipage}{\textwidth} {\footnotesize \begin{verbatim}

n1       n2        n3       n4        n5       n6        n7       n8
[0 --a1-- * --p12-- 0]--a2--[* --p23-- 0]--a3--[*]--p34--[* --a4-- 0]

\end{verbatim} } \end{minipage} \par\noindent\rule{\textwidth}{0.4pt}

Clusters $1,...,m$ resemble clusters $k,...,i$ from the above case BB (but in reverse order) what ensures that the cluster $m+1$ will never get the node $n_{2m+1}$. Therefore this case has to be rejected. So either $ a_m> p_{m,m+1 }< a_{m+1}$ for each $m$ or $ a_m< p_{m,m+1 }> a_{m+1}$ for each $m$ or
\subsection{Case    $a_m> p_{m,m+1 }< a_{m+1}$ for each $m$} \label{sec:BD}
Let us investigate the case when $k>4$
AND $a_m> p_{m,m+1 }< a_{m+1}$ for each $m$. Let $a_i$ be the longest. Make a seeding $s_j=n_{2j-1}$ for $j=1,...,i$. $s_j=n_{2j-2}$ for $j=i+1,k$. Initially clusters will form: $\{n_1\}$, $\{n_{2j-2},n_{2j-1}\}$ for $j=2,...,k-1$. and $\{n_{2k-2},n_{2k-1},n_{2k}\}$. No cluster $j$ will ever take over node $n_{2j+1}$, as previously stated because $a_1>p_{12}$. The question is if it can take over $n_{2j}$. Considers clusters $i$ and $i+1$. with nodes $\{n_{2i-2},n_{2i-1}\}$ and $\{n_{2i},n_{2i+1}\}$ resp. initially. Clusters $1,..i$ are stable until cluster $i+1$ changes. In the extreme case the cluster $i+1$ can take over $n_{2i+2}$. In this case the distance from the cluster center to $n_{2i}$ amounts to $(2p_{i,i+1}+a_{i+1})/3)a_{i+1}$ which means that cluster $i$ will not get the node $n_{2i}$ so that the required clustering cannot be formed. So this case needs to be rejected.
\subsection{Case    $a_m< p_{m,m+1 }> a_{m+1}$ for each $m$} \label{sec:BE}
Let us investigate the case when $k>4$
AND $a_m< p_{m,m+1 }> a_{m+1}$ for each $m$. Consider an $i=5$. As $p_{45}>a_5>a_5/2$, the 4th cluster will never acquire $n_9$. So it is only possible for cluster 5 to acquire $n_8$ or nodes with lower indexes. If this happens, the probabilistic k-richness definition is violated. If it does not acquire it at any point in time, then k-richness definition is violated dues to conditions described in the case $k=4$ above. If cluster 5 acquires $n_8$, then the argument there can repeated under the condition that $a_4$ is close to zero.

This completes the proof.

\bibliographystyle{plain}
\bibliography{../richnessfallacyRKMK/V3_centricconsistencyRAKMAK_bib}

\end{document}